\newcommand{\parsingmethod}[1]{\textsc{#1}}
\newcommand{\masktoken}{\texttt{[MASK]}\xspace}
\newcommand{\specific}{cloze-like\xspace}
\newcommand{\Specific}{Cloze-like\xspace}
\newcommand{\Specificp}[1]{\textsc{Cloze}-${#1}\%$\xspace}
\newcommand{\generic}{generic\xspace}
\newcommand{\masksequence}{X_{\setminus i}\xspace}
\newcommand{\masksequencetwo}{X_{\setminus \{i, j\}}\xspace}
\newcommand{\sigmazz}{\Sigma_{\rmZ\rmZ}\xspace}
\newcommand{\sigmaxx}{\Sigma_{\rmX\rmX}\xspace}
\newcommand{\sigmaxxnoi}{\Sigma_{\rmX\rmX, \setminus i, \setminus i}\xspace}
\newcommand{\sigmaxxtakei}{\Sigma_{\rmX\rmX, \setminus i, i}\xspace}
\newcommand{\xnoi}{\rmX_{\setminus i}\xspace}
\newcommand{\anoi}{A_{\setminus i}\xspace}
\newcommand{\atakei}{A_{i}\xspace}
\newcommand{\xmaskone}{\rvx_{\text{mask}, i}}
\newcommand{\xpca}{\rmX_{\text{PCA}}}
\newcommand{\betatwoslsi}{\beta_{\text{2SLS}, i}}
\newcommand{\betatwosls}{\beta_{\text{2SLS}}}
\newcommand{\betamaskone}{\beta_{\xnoi \rightarrow \rvx_i}}
\newcommand{\lmlm}{L_{\text{MLM}}}
\newcommand{\lfinetune}{L_{\text{finetune}}}
\definecolor{label_color}{HTML}{A95AA1}
\definecolor{specific_color}{HTML}{EB5757}
\definecolor{generic_color}{HTML}{2F80ED}
\def\rvx{{\mathbf{x}}}
\def\rmX{{\mathbf{X}}}
\def\rmY{{\mathbf{Y}}}
\def\rmZ{{\mathbf{Z}}}
\DeclareMathAlphabet{\mathsfit}{\encodingdefault}{\sfdefault}{m}{sl}
\SetMathAlphabet{\mathsfit}{bold}{\encodingdefault}{\sfdefault}{bx}{n}
\begin{document}
\title{On the Inductive Bias of Masked Language Modeling: \\ From Statistical to Syntactic Dependencies}
\author{
  Tianyi Zhang \\
  Computer Science Department \\
  Stanford University \\
  \texttt{tianyizhang@cs.stanford.edu} \\\And
  Tatsunori B. Hashimoto \\
  Computer Science Department \\
  Stanford University \\
  \texttt{thashim@stanford.edu} \\}

\maketitle

\begin{abstract}
We study how masking and predicting tokens in an unsupervised fashion can give  rise  to  linguistic  structures and downstream performance gains.  Recent theories have suggested that pretrained language models acquire useful inductive biases through masks that implicitly act as cloze reductions.
While appealing, we show that the success of the random masking strategy used in practice cannot be explained by such \specific masks alone. We construct cloze-like masks using task-specific lexicons for three different classification datasets and show that the majority of pretrained performance gains come from \emph{generic} masks that are not associated with the lexicon.
To  explain  the  empirical  success  of these generic masks,  we demonstrate a  correspondence  between  the  masked language model (MLM)  objective  and existing methods for learning statistical  dependencies in graphical models. Using this, we derive a method for extracting these learned statistical dependencies in MLMs and show that these dependencies encode useful inductive biases in the form of syntactic structures. In an unsupervised parsing evaluation, simply forming a minimum spanning tree on the implied statistical dependence structure outperforms a classic method for unsupervised parsing (58.74 vs. 55.91 UUAS).

\end{abstract}

\section{Introduction}
Pretrained masked language models~\citep{DeCh19ber,LiOt19rob} have benefitted a wide range of natural language processing (NLP) tasks~\citep{Li19fin, WaWe19ent,ZhXi20inc}.
Despite recent progress in understanding \emph{what} useful information is captured by MLMs~\citep{LiGa19lin,HeMa19str}, 
it remains a mystery \emph{why} task-agnostic masking of words can capture linguistic structures and transfer to downstream tasks.


\begin{figure}
    \centering
    \includegraphics[width=1.0\linewidth]{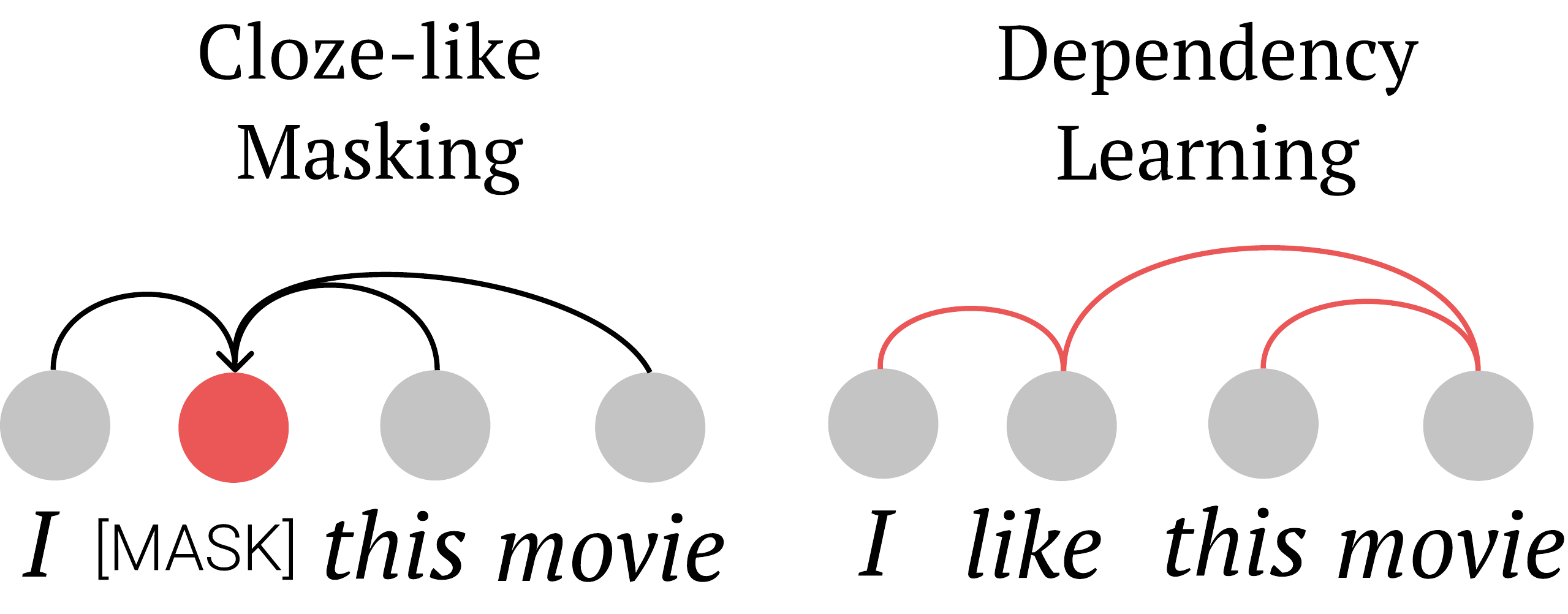}
    \caption{We study the inductive bias of MLM objectives and show that cloze-like masking (left) does not account for much of the downstream performance gains. Instead, we show that MLM objectives are biased towards extracting both statistical and syntactic dependencies using random masks (right).}
    \label{fig:figure_1}
    \vspace{-15pt}
\end{figure}

One popular justification of MLMs relies on viewing masking as a form of cloze reduction. Cloze reductions reformulate an NLP task into a prompt question and a blank and elicit answers by filling in the blank (\Cref{fig:figure_1}). 
When tested by cloze reductions pretrained MLMs and left-to-right language models (LMs) have been shown to possess abundant factual knowledge~\citep{PeRo19lan} and display impressive few-shot ability~\citep{BrMa20lan}.
This success has inspired recent hypotheses that some word masks are \specific and provide indirect supervision to downstream tasks~\citep{SaMa20mat,LeLe20pre}.
For example, a sentiment classification task~\citep{PaLe02thu} can be reformulated into filling in \textit{like} or \textit{hate} in the cloze \textit{I \masktoken this movie}. Such \specific masks provide a clear way in which an MLM can implicitly learn to perform sentiment classification.

While this hypothesis is appealing, MLMs in practice are trained with uniform masking that does not contain the special structure required by \specific masks most of the time.
For example, predicting a \generic word \textit{this} in the cloze \textit{I like \masktoken movie} would not offer task-specific supervision.
We quantify the importance of \specific and \generic masks by explicitly creating \specific masks using task-specific lexicons and comparing models pretrained on these masks. These experiments suggest that although \specific masks can be helpful, the success of uniform masking cannot be explained via \specific masks alone. In fact, we demonstrate that uniform masking performs as well as a negative control where we explicitly remove \specific masks from the mask distribution.

To address this mismatch between theory and practice, we offer a new hypothesis of how \generic masks can help downstream learning.
We propose a conceptual model for MLMs by drawing a correspondence between masking and graphical model neighborhood selection~\citep{MeBh06hig}.
Using this, we show that MLM objectives are designed to recover statistical dependencies in the presence of latent variables and propose an estimator that can recover these learned dependencies from MLMs. 
We hypothesize that statistical dependencies in the MLM objective capture useful linguistic dependencies and demonstrate this by using recovered statistical dependencies to perform unsupervised parsing, outperforming an actual unsupervised parsing baseline~\citep[$58.74$ vs $55.91$ UUAS;][]{KlMa04cor}. 
We release our implementation on Github\footnote{\url{https://github.com/tatsu-lab/mlm_inductive_bias}}.

\section{Related works}

\textbf{Theories inspired by Cloze Reductions.}
Cloze reductions are fill-in-the-blank tests that reformulate an NLP task into an LM problem. Existing work demonstrates that such reductions can be highly effective for zero/few-shot prediction~\citep{RaWu19lan,BrMa20lan} as well as relation extraction~\citep{PeRo19lan,JiXu20how}.

These fill-in-the-blank tasks provide a clear way by which LMs can obtain supervision about downstream tasks, and recent work demonstrates how such implicit supervision can lead to useful representations~\citep{SaMa20mat}.
More general arguments by \citet{LeLe20pre} show these theories hold across a range of self-supervised settings.
While these theories provide compelling arguments for the value of pre-training with cloze tasks, they do not provide a clear reason why \emph{uniformly random} masks such as those used in BERT provide such strong gains. 
In our work, we quantify this gap using lexicon-based \specific masks and show that \specific masks alone are unlikely to account for the complete success of MLM since \generic and non-cloze masks are responsible for a substantial part of the empirical performance of MLMs.

\textbf{Theories for vector representations.}
Our goal of understanding how masking can lead to useful inductive biases and linguistic structures is closely related to that of papers studying the theory of word embedding representations~\citep{MiCh13eff,PeSo14glo, ArLi15ran}.
Existing work has drawn a correspondence between word embeddings and low-rank factorization of a pointwise mutual information (PMI) matrix~\citep{LeGo14neu} and others have shown that PMI is highly correlated with human semantic similarity judgements~\citep{HaAl16wor}.

While existing theories for word embeddings cannot be applied to MLMs, we draw inspiration from them and derive an analogous set of results. Our work shows a correspondence between MLM objectives and graphical model learning through conditional mutual information, as well as evidence that the conditional independence structure learned by MLMs is closely related to syntactic structure.



\textbf{Probing Pretrained Representations.}
Recent work has applied probing methods~\citep{BeGl19ana} to analyze \emph{what} information is captured in the pretrained representations.
This line of work shows that pretrained representations encode a diverse range of knowledge~\citep{PeNe18dis,TeXi19wha,LiGa19lin,HeMa19str,WuCh20per}.
While probing provides intriguing evidence of linguistic structures encoded by MLMs, they do not address the goals of this work, which is \emph{how} the pretraining objective encourages MLMs to extract such structures.
\section{Motivation}
\subsection{Problem Statement}
\textbf{Masked Language Modeling} asks the model to predict a token given its surrounding context. 
Formally, consider an input sequence $X$ of $L$ tokens $\langle x_1, \ldots, x_L  \rangle$ where each variable takes a value from a vocabulary $\mathcal{V}$.
Let $X\sim D$ be the data generating distribution of $X$.
Let $x_i$ be the $i$th token in $X$, and let $\masksequence$ denote the sequence after replacing the $i$th token with a special \masktoken token. 
In other words, 
$$\masksequence := \langle x_1, \ldots, x_{i-1}, \text{\masktoken}, x_{i+1}, \ldots, x_L  \rangle.$$
Similarly, define $\masksequencetwo$ as replacing both $x_i$ and $x_j$ with \masktoken.
MLM determines what tokens are masked by a mask distribution $i \sim M$.
The goal of MLM is to learn a probabilistic model $p_{\theta}$ that minimizes
$$
L_{\text{MLM}} = \underset{X \sim D, i \sim M}{\E} \;-\log p_{\theta}(x_i | \masksequence).
$$
In BERT pretraining, each input token is masked with a fixed, uniform probability, which is a hyperparameter to be chosen.
We refer to this strategy as \emph{uniform masking}.

\textbf{Finetuning} is the canonical method for using pretrained MLMs.
Consider a prediction task where $y \in \mathcal{Y}$ is the target variable, \emph{e.g.}, the sentiment label of a review.
Finetuning uses gradient descent to modify the pretrained parameters $\theta$ and learn a new set of parameters $\phi$ to minimize
$$
L_{\text{finetune}} = \underset{X\sim D', y\sim p(y|X)}{\E} \; - \log p_{\theta, \phi}(y | X),
$$
where $p(y|x)$ is the ground-truth distribution and $D'$ is the data distribution of the downstream task.

\textbf{Our goals.}
We will study how the mask distribution $M$ affects downstream performance.
We define perfect cloze reductions as some partition of the vocabulary  $\mathcal{V}_y$ such that $p(x_{i} \in \mathcal{V}_y | \masksequence) \approx p(y|X)$.
For a distribution $M$ such that the masks we draw are perfect cloze-reductions, the MLM objective offers direct supervision to finetuning since \ $L_{\text{MLM}} \approx L_{\text{finetune}}$.
In contrast to \specific masking, in uniform masking we can think of $p_{\theta}$ as implicitly learning a generative model of $\rmX$~\citep{WaCh19ber}.
Therefore, as $M$ moves away from the ideal distribution and becomes more uniform, we expect $p_{\theta}$ to model more of the full data distribution $D$ instead of focusing on \specific supervision for the downstream task. This mismatch between theory and practice raises questions about how MLM with uniform masking can learn useful inductive biases.

When $\lmlm$ is not $\lfinetune$, what is $\lmlm$ learning?
We analyze $\lmlm$ and show that it is similar to a form of conditional mutual information based graphical model structure learning.

\subsection{Case Study for \Specific Masking}
\begin{figure}[t!]
    \centering
    \includegraphics[width=0.85\linewidth]{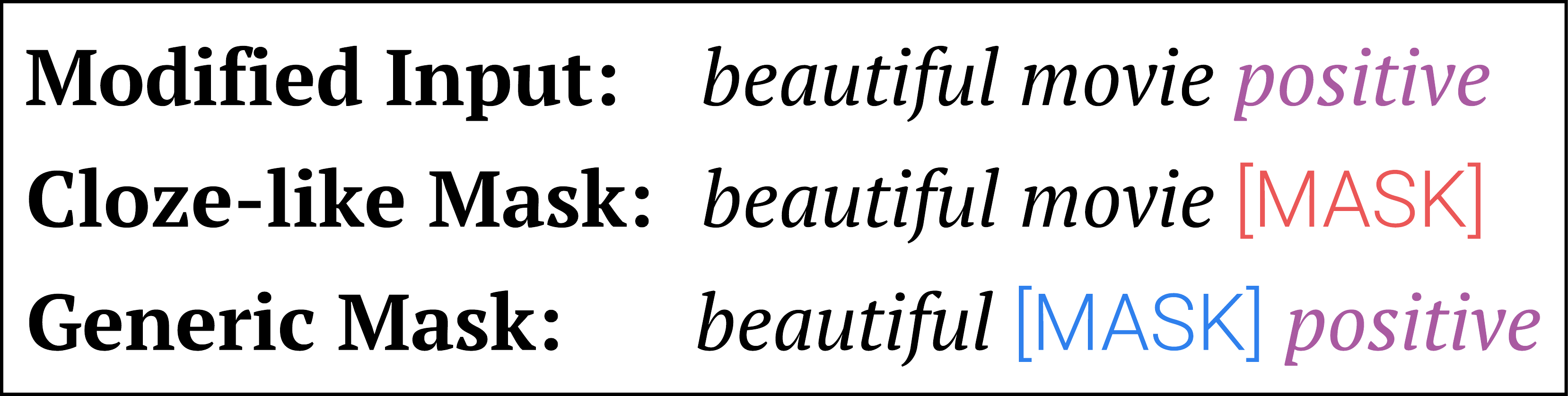}
    \caption{In our case study, we append the \textcolor{label_color}{true label} to each input and create ideal \textcolor{specific_color}{\specific masks}. We study how deviations from the ideal mask distribution affect downstream performance by adding in \textcolor{generic_color}{\generic masks}.
    }
    \label{fig:case_study_illustration}
    \vspace{-10pt}
\end{figure}

\label{sec:case_study}

To motivate our subsequent discussions, we perform a controlled study for the case when $\lmlm \approx \lfinetune$ and analyze how deviations from the ideal mask distribution affect downstream performance. 
We perform analysis on the Stanford Sentiment Treebank~\citep[SST-2;][]{SoPe13rec}, which requires models to classify short movie reviews into positive or negative sentiment.
We append the ground-truth label (as the word \textit{positive} or \textit{negative}) to each movie review (\Cref{fig:case_study_illustration}).
Masking the last word in each review is, by definition, an ideal mask distribution.
To study how the deviation from the ideal mask distribution degrades downstream performance, we vary the amount of \specific masks during training.
We do this by masking out the last word for $p\%$ of the time and masking out a random word in the movie review for $(100-p)\%$ of the time, and choose $p \in \{0, 20, 40, 60, 80, 100\}$.

\textbf{Experimental details.} 
We split the SST-2 training set into two halves, use one for pretraining, and the other for finetuning.
For the finetuning data, we do not append the ground-truth label.
We pretrain small transformers with $\lmlm$ using different masking strategies and finetune them along with a baseline that is not pretrained (\textsc{NoPretrain}). 
Further details are in \Cref{app:details}.

\begin{figure}[t!]
    \centering
    \includegraphics[width=\linewidth]{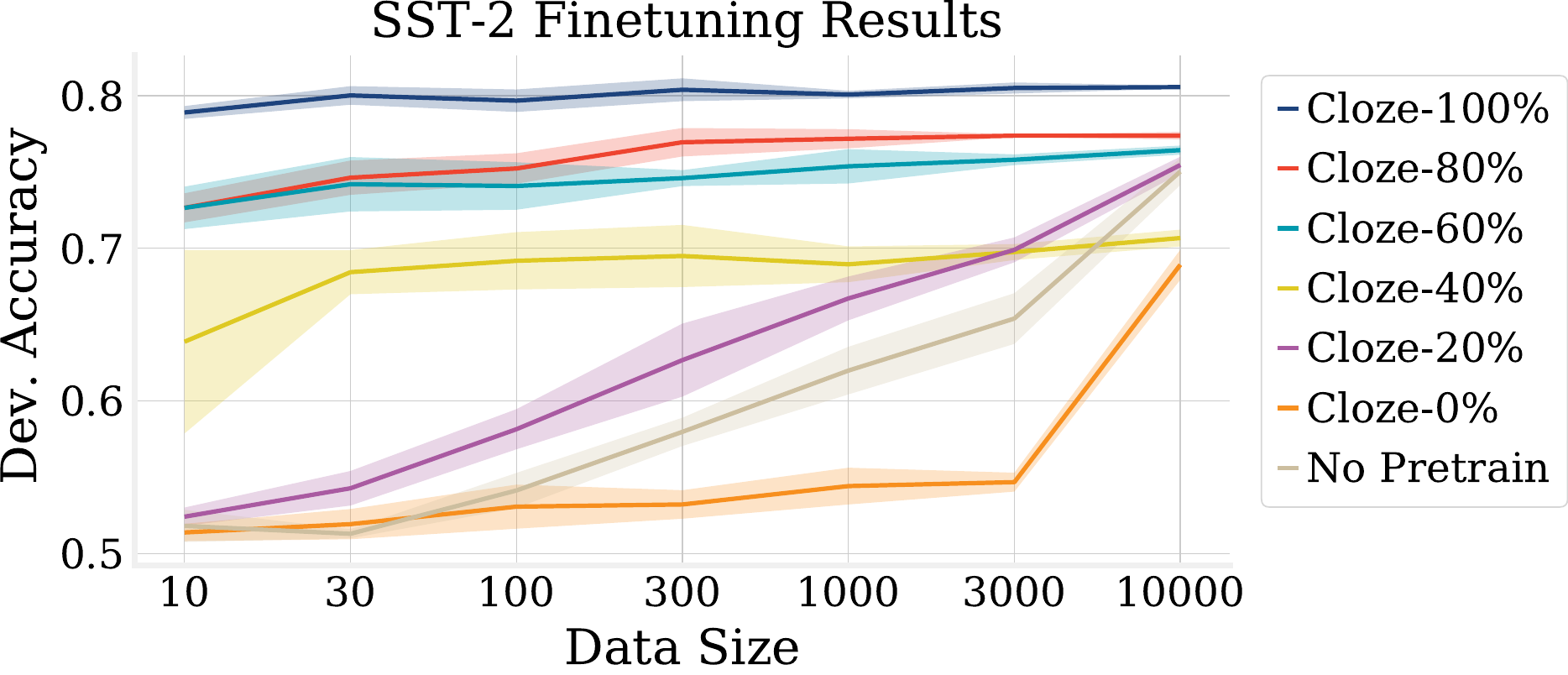}
    \caption{SST-2 development set accuracy. \Specificp{p} is pretrained on a mixture of masks where $p\%$ of the masks are \Specific. \textsc{NoPretrain} trains a classifier without any pretraining.
    Even a small modification of the ideal mask distribution degrades performance.
    }
    \label{fig:sst_pretrain_finetune}
    \vspace{-15pt}
\end{figure}

\textbf{Results.}
We observe that while \specific masks can lead to successful transfer, \emph{even a small modification of the ideal mask distribution deteriorates performance}. 
\Cref{fig:sst_pretrain_finetune} shows the development set accuracy of seven model variants averaged across ten random trials.
We observe as $p$ decreases, the performance of \Specificp{p} degrades.
Notably, \Specificp{80} is already worse than \Specificp{100} and \Specificp{20} does not outperform \textsc{NoPretrain} by much.
We notice that \Specificp{0} in fact degrades finetuning performance, potentially because the pretrained model is over-specialized to the language modeling task~\citep{ZhWu20rev,TaSi20inv}.
While this is a toy example, we observe similar results for actual MLM models across three tasks (\Cref{sec:random_mask}), and this motivates us to look for a framework that explains the success of \generic masks in practice.

\section{Analysis}
In the previous section, we saw that \specific masks do not necessarily explain the empirical success of MLMs with uniform masking strategies. Understanding uniform masking seems challenging at first, as uniform-mask MLMs seem to lack task-specific supervision and is distinct from existing unsupervised learning methods such as word embeddings (which rely upon linear dimensionality reduction) and autoencoders (which rely upon denoising). However, we show in this section that there is a correspondence between MLM objectives and classic methods for graphical model structure learning. As a consequence, we demonstrate that MLMs are implicitly trained to recover statistical dependencies among observed tokens.

\subsection{Intuition and Theoretical Analysis}
\label{sec:gaussian_theory}
Our starting point is the observation that predicting a single feature ($x_i$) from all others ($\masksequence$) is the core subroutine in the classic Gaussian graphical model structure learning algorithm of \citet{MeBh06hig}. In this approach, $L$ different Lasso regression models are trained~\citep{Ti96reg} with each model predicting $x_i$ from $\masksequence$, and the nonzero coefficients of this regression correspond to the conditional dependence structure of the graphical model. 

\begin{figure}[t]
    \centering
    \includegraphics[width=0.6\linewidth]{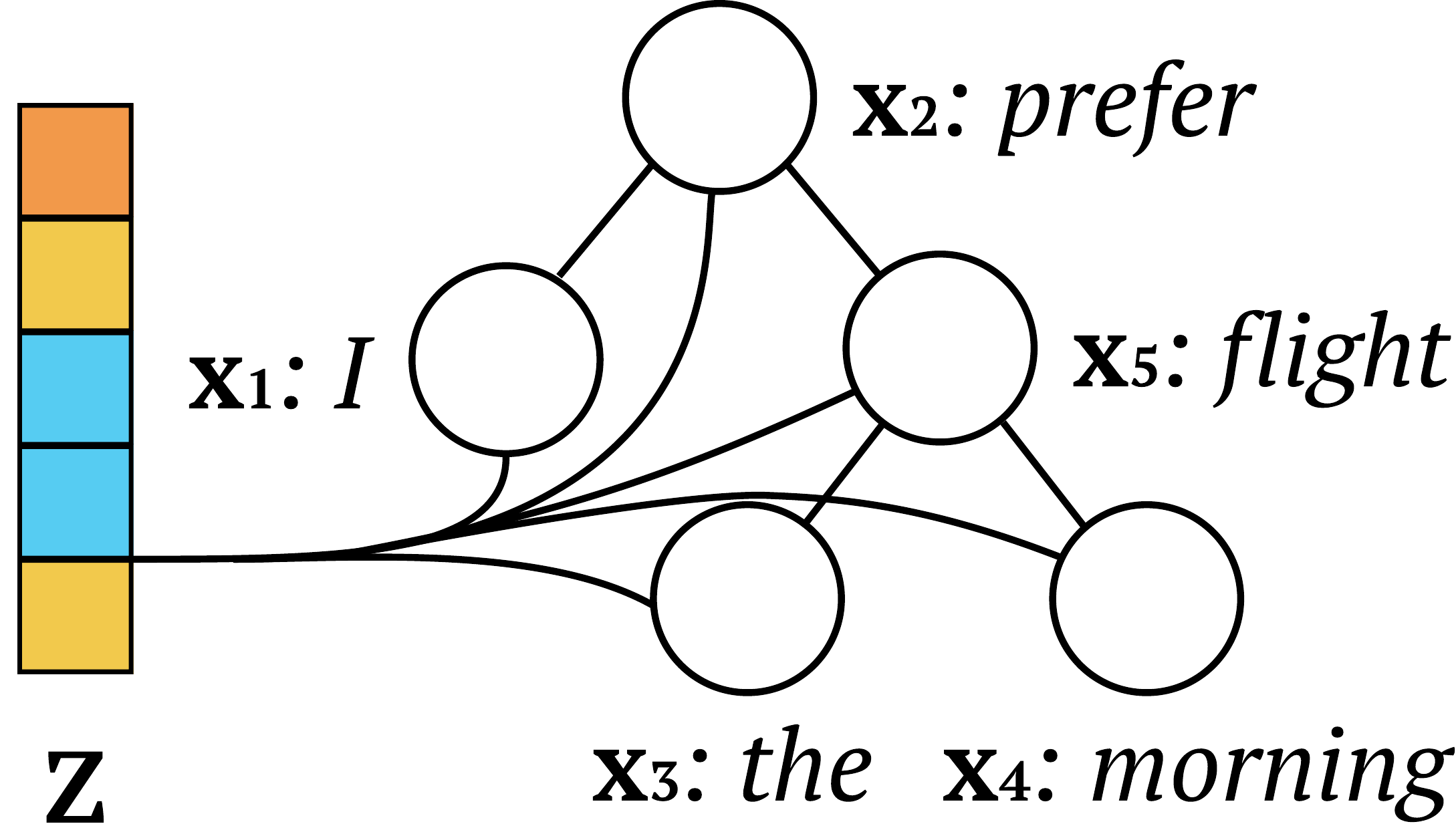}
    \caption{Our conceptual framework of MLM. All coordinates of $X$ are dependent on the latent variable $\rmZ$ while there is only sparse dependency among $X$.}
    \label{fig:theory_illustration}
    \vspace{-15pt}
\end{figure}

The MLM objective can be interpreted as a nonlinear extension of this approach, much like a classical algorithm that uses conditional mutual information (MI) estimators to recover a graphical model~\citep{AnTa12hig}. 
Despite the similarity, real world texts are better viewed as models with latent variables \citep[\emph{e.g.} topics;][]{BlNg03lat} and many dependencies across tokens arise due to latent variables, which makes learning the direct dependencies difficult.
We show that MLMs implicitly recover the latent variables and can capture the direct dependencies while accounting for the effect of latent variables.
Finally, MLMs are only approximations to the true distribution and we show that the MLM objective can induce high-quality approximations of conditional MI.



\textbf{Analysis setup.} To better understand MLMs as a way to recover graphical model structures, we show mask-based models can recover latent variables and the direct dependencies among variables in the Gaussian graphical model setting of~\citet{MeBh06hig}.
Let $\rmX = [\rvx_1, \ldots, \rvx_L] \in \mathbb{R}^{L}$ represent an input sequence where each of its coordinates $\rvx_i$ represents a token, and $\rmZ \in \mathbb{R}^{k}$ be a latent variable that controls the sequence generation process. 
We assume that all coordinates of $\rmX$ are dependent on the latent variable $\rmZ$, and there are sparse dependencies among the observed variables (\Cref{fig:theory_illustration}).
In other words, we can write $\rmZ \sim \normal(0, \Sigma_{\rmZ\rmZ})$ and $\rmX \sim \normal(A \rmZ, \sigmaxx)$.
Intuitively, we can imagine that $\rmZ$ represents shared semantic information, \emph{e.g.}\ a topic, and $\sigmaxx$ represents the syntactic dependencies. In this Gaussian graphical model, the MLM is analogous to regressing each coordinate of $\rmX$ from all other coordinates, which we refer to as masked regression.

\textbf{MLM representations can recover latent variable.} 
We now study the behavior of masked regression through the representation $\xmaskone$ that is obtained by applying masked regression on the $i$th coordinate of $\rmX$ and using the predicted values.
Our result shows that masked regression is similar to the two-step process of first recovering the latent variable $\rmZ$ from $\xnoi$ and then predicting $x_i$ from $\rmZ$.

Let $\sigmaxxtakei \in \mathbb{R}^{d{-}1}$ be the vector formed by dropping the $i$th row and taking the $i$th column of $\sigmaxx$ and $\betatwoslsi$ be the linear map resulting from the two-stage regression $\xnoi \to \rmZ \to x_i$.

\begin{restatable}{prop}{regressionprop}
\label{prop:regression}
Assuming that $\sigmaxx$ is full rank, 
$$
    \xmaskone = \betatwoslsi \xnoi + O(\norm{\sigmaxxtakei}_{2}),
$$
\end{restatable}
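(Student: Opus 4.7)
The plan is to exploit the closed-form Gaussian conditional expectation on both sides and show that the masked-regression estimator and the two-stage estimator only differ by a term whose coefficients are exactly $\sigmaxxtakei$. Since $\rmZ$ and $\rmX \mid \rmZ$ are jointly Gaussian and the paper's setup implies $\E[\rvx_i \mid \xnoi]$ is the Bayes optimal predictor (which coincides with what masked regression recovers at the population level), the result reduces to a linear-algebraic identity.

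First I would assemble the joint second-moment structure. Writing $\rmX = A\rmZ + \rve$ with $\rve \sim \normal(\vzero, \sigmaxx)$ independent of $\rmZ$, the marginal covariance is
\[
\Sigma := \mathrm{Cov}(\rmX) = A\,\sigmazz\, A^{\top} + \sigmaxx .
\]
Partitioning into block $i$ versus $\setminus i$, this gives
\[
\Sigma_{i,\setminus i} = \atakei\, \sigmazz\, \anoi^{\top} + \sigmaxxtakei^{\top},
\qquad
\Sigma_{\setminus i,\setminus i} = \anoi\, \sigmazz\, \anoi^{\top} + \sigmaxxnoi .
\]

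Next I would write the two estimators explicitly. The masked-regression output is the Gaussian conditional mean
\[
\xmaskone \;=\; \E[\rvx_i \mid \xnoi] \;=\; \Sigma_{i,\setminus i}\, \Sigma_{\setminus i,\setminus i}^{-1}\, \xnoi .
\]
For the two-stage estimator, the first-stage regression of $\rmZ$ on $\xnoi$ gives $\hat{\rmZ} = \sigmazz \anoi^{\top} \Sigma_{\setminus i,\setminus i}^{-1} \xnoi$, and the population second stage maps $\rmZ \mapsto \atakei \rmZ$, so
\[
\betatwoslsi\, \xnoi \;=\; \atakei\, \sigmazz\, \anoi^{\top}\, \Sigma_{\setminus i,\setminus i}^{-1}\, \xnoi .
\]

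Subtracting and substituting the decomposition of $\Sigma_{i,\setminus i}$ from the first step causes the $\atakei \sigmazz \anoi^{\top}$ piece to cancel, leaving
\[
\xmaskone - \betatwoslsi \xnoi \;=\; \sigmaxxtakei^{\top}\, \Sigma_{\setminus i,\setminus i}^{-1}\, \xnoi .
\]
Finally, I would bound this remainder in operator norm: $\|\sigmaxxtakei^{\top} \Sigma_{\setminus i,\setminus i}^{-1} \xnoi\|_2 \le \|\sigmaxxtakei\|_2 \cdot \|\Sigma_{\setminus i,\setminus i}^{-1}\|_{\mathrm{op}} \cdot \|\xnoi\|_2$, and the full-rank assumption on $\sigmaxx$ (which forces $\Sigma_{\setminus i,\setminus i}$ to be invertible with bounded inverse) turns the last two factors into $O(1)$ constants, yielding the stated $O(\|\sigmaxxtakei\|_2)$ bound.

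The argument is essentially a block-matrix identity, so the main obstacle is not conceptual but notational: keeping track of the transpose conventions between $\sigmaxxtakei$ (a column vector) and the row-vector that appears inside $\Sigma_{i,\setminus i}$, and being explicit that $\xmaskone$ at the population optimum is the Bayes predictor rather than an arbitrary neural-network output. A secondary subtlety is the meaning of the $O(\cdot)$ notation: I would clarify that it refers to the deterministic operator-norm bound above, so that the constant hidden in $O$ depends on $\|\Sigma_{\setminus i,\setminus i}^{-1}\|_{\mathrm{op}}$ and $\|\xnoi\|_2$ but not on the sparse-dependence covariance $\sigmaxx$ itself.
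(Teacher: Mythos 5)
Your proposal is correct and follows essentially the same route as the paper's proof: compute the population least-squares coefficient for masked regression and for the two-stage regression via the block covariance decomposition $\Sigma_{i,\setminus i} = \atakei\sigmazz\anoi^{\top} + \sigmaxxtakei^{\top}$ and $\Sigma_{\setminus i,\setminus i} = \anoi\sigmazz\anoi^{\top} + \sigmaxxnoi$, observe the $\atakei\sigmazz\anoi^{\top}$ piece cancels, and bound the residual $\sigmaxxtakei^{\top}\Sigma_{\setminus i,\setminus i}^{-1}\xnoi$ in operator norm. The paper additionally notes the minor refinement $\|\Sigma_{\setminus i,\setminus i}^{-1}\|_{op} \le \|(A\sigmazz A^{\top} + \sigmaxx)^{-1}\|_{op}$ (by eigenvalue interlacing), but this is cosmetic and does not change the argument.
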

In other words, masked regression implicitly recovers the subspace that we would get if we first explicitly recovered the latent variables ($\betatwoslsi$) with an error term that scales with the off-diagonal terms in $\Sigma_{XX}$. The proof is presented in \Cref{app:proof}.

To give additional context for this result, let us consider the behavior of a different representation learning algorithm: PCA. It is well-known that PCA can recover the latent variables as long as the $\sigmazz$ dominates the covariance $Cov(\rmX)$. 
We state this result in terms of $\xpca$, the observed data projected to the first $k$ components of PCA.
\begin{restatable}{prop}{pcaprop}
\label{prop:pca}
    Let $\lambda_k$ be the $k$th eigenvalue of $A\sigmazz A^{\top}$ and $\lambda_{\rmX\rmX, k{+}1}$ be the $k{+}1$th eigenvalue of $\sigmaxx$ and $V$ be the first $k$ eigenvectors of Cov(X). Assuming $\lambda_k > \lambda_{\rmX\rmX, k{+}1}$, we have
    {
        \small
    \begin{multline*}
    \E_{\rmX} \norm{A\rmZ - \xpca}_{2} \leq \\ 
        \frac{\sqrt{2}\norm{\sigmaxx}_{op}}{\lambda_{k} - \lambda_{\rmX\rmX, k+1}} (\norm{A\rmZ}_{2} + \sqrt{\mathrm{tr}(\sigmaxx)}) +
        \norm{AA^{\top}}_{op} \sqrt{\mathrm{tr}(\sigmaxx)},
    \end{multline*}
    }
    where $\norm{\cdot}_{op}$ is the operator norm and $\mathrm{tr}(\cdot)$ is the trace.
  \end{restatable}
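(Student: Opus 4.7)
The plan is to view $\mathrm{Cov}(\rmX) = A \sigmazz A^{\top} + \sigmaxx$ as a symmetric perturbation of the rank-$k$ signal covariance $M_0 := A \sigmazz A^{\top}$ and apply the Davis--Kahan $\sin\Theta$ theorem to show that the PCA subspace $V$ is close to the top-$k$ eigenspace $U$ of $M_0$. Since $M_0$ has rank at most $k$, $\mathrm{col}(A) \subseteq \mathrm{col}(U)$, so $A\rmZ = U U^{\top} A\rmZ$; hence closeness of $V$ to $U$ translates directly into $\xpca = V V^{\top} \rmX$ preserving the signal $A\rmZ$.

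First I would decompose the error as
\[
A\rmZ - \xpca \;=\; (I - V V^{\top}) A\rmZ \;-\; V V^{\top} \rmN,
\]
where $\rmN := \rmX - A\rmZ \sim \normal(0, \sigmaxx)$ is independent of $\rmZ$, and apply the triangle inequality. Second, I would bound the signal-subspace term via $\norm{(I - V V^{\top}) A\rmZ}_{2} \leq \norm{(I - V V^{\top}) U}_{op}\,\norm{A\rmZ}_{2}$ (using $A\rmZ \in \mathrm{col}(U)$) and invoke the operator-norm Davis--Kahan $\sin\Theta$ theorem under the assumption $\lambda_k > \lambda_{\rmX\rmX, k+1}$ to get
\[
\norm{(I - V V^{\top}) U}_{op} \;\leq\; \frac{\sqrt{2}\,\norm{\sigmaxx}_{op}}{\lambda_k - \lambda_{\rmX\rmX, k+1}}.
\]
Third, I would control the noise term by $\norm{V V^{\top} \rmN}_{2} \leq \norm{\rmN}_{2}$ and take expectation via Jensen's inequality to get $\E\norm{\rmN}_2 \leq \sqrt{\mathrm{tr}(\sigmaxx)}$. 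Combining these yields a bound of the shape $\sin\Theta \cdot \norm{A\rmZ}_{2} + \sqrt{\mathrm{tr}(\sigmaxx)}$; the cross-term $\sin\Theta \cdot \sqrt{\mathrm{tr}(\sigmaxx)}$ in the stated inequality would come from also passing the noise through the Davis--Kahan factor when bounding $\norm{(I - V V^{\top})\rmN}_{2}$, and the trailing $\norm{AA^{\top}}_{op}\sqrt{\mathrm{tr}(\sigmaxx)}$ term from a further linearisation relating the PCA reconstruction to a two-stage estimator that factors through the recovered latent.

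The main obstacle is the eigengap bookkeeping in Davis--Kahan: the denominator $\lambda_k - \lambda_{\rmX\rmX, k+1}$ mixes the $k$th eigenvalue of $M_0$ with the $(k{+}1)$st eigenvalue of the noise covariance $\sigmaxx$, whereas the sine-theta theorem is naturally phrased with a gap inside a single matrix. I would use Weyl's inequality, together with $\lambda_{k+1}(M_0) = 0$ (from $\mathrm{rank}(M_0) \leq k$), to transfer eigenvalues between $M_0$ and $\mathrm{Cov}(\rmX) = M_0 + \sigmaxx$ and then argue that the effective Davis--Kahan gap dominates $\lambda_k - \lambda_{\rmX\rmX, k+1}$. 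A secondary subtlety is that $\norm{A\rmZ}_{2}$ is itself random inside the outer expectation $\E_{\rmX}[\cdot]$; standard Jensen or Gaussian-norm concentration handles this but must be carried through cleanly, and matching the exact shape of the last additive term is the most fiddly accounting step.
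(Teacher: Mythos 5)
Your decomposition $A\rmZ - \xpca = (I - V V^\top) A\rmZ - V V^\top \rmN$ is a genuinely different and cleaner route than the paper's. The paper instead writes $A\rmZ = AA^\top A\rmZ$ (implicitly assuming $A$ has orthonormal columns so that $AA^\top$ coincides with your $UU^\top$), adds and subtracts $VV^\top A\rmZ$, and then, rather than using the trivial fact $\norm{VV^\top}_{op}=1$, re-expands $\norm{VV^\top}_{op} \le \norm{AA^\top}_{op} + \norm{VV^\top - AA^\top}_{op}$; this is what manufactures the extra cross-term $\frac{\sqrt2 \norm{\sigmaxx}_{op}}{\lambda_k-\lambda_{\rmX\rmX,k+1}}\sqrt{\mathrm{tr}(\sigmaxx)}$ and the trailing $\norm{AA^\top}_{op}\sqrt{\mathrm{tr}(\sigmaxx)}$. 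Your decomposition, combined with $\norm{VV^\top\rmN}_2 \le \norm{\rmN}_2$ and Jensen, already yields the strictly sharper bound
$\frac{\sqrt2\norm{\sigmaxx}_{op}}{\lambda_k-\lambda_{\rmX\rmX,k+1}}\norm{A\rmZ}_2 + \sqrt{\mathrm{tr}(\sigmaxx)}$,
which implies the proposition outright because the two extra terms in the stated bound are nonnegative (under the same implicit $A^\top A = I$, $\norm{AA^\top}_{op}=1$). So you are done at that point, and the closing paragraph where you try to reverse-engineer the remaining terms is the weak spot: there is no $(I-VV^\top)\rmN$ term in your decomposition to ``pass through Davis--Kahan,'' and no ``further linearisation through a two-stage estimator'' is needed or used in the paper. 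Recognize that a tighter bound subsumes a looser one and stop there; you gain clarity (and a slightly stronger result) over the paper's chain of inequalities, at the cost of needing the small observation that $\mathrm{col}(A)\subseteq\mathrm{col}(U)$ so $A\rmZ = UU^\top A\rmZ$, which you already made.
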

This shows that whenever $\Sigma_{XX}$ is sufficiently small and $\lambda_k$ is large (i.e., the covariance is dominated by $Z$), then PCA recovers the latent information in $\rmZ$.
The proof is based on the Davis-Kahan theorem~\citep{StSu90mat} and is presented in \Cref{app:proof}.

Comparing the bound of PCA and masked regression, both bounds have errors that scales with $\sigmaxx$, but the key difference in the error bound is that the error term for masked regression does not scale with the per-coordinate noise ($\diag(\sigmaxx)$) and thus can be thought of as focusing exclusively on interactions within $\rmX$.
Analyzing this more carefully, we find that $\sigmaxxtakei$ corresponds to the statistical dependencies between $x_i$ and $\xnoi$, which we might hope captures useful, task-agnostic structures such as syntactic dependencies. 


\textbf{MLM log-probabilies can recover direct dependencies.}
Another effect of latent variables is that many tokens have indirect dependencies through the latent variables, which poses a challenge to recovering the direct dependencies among tokens.
We now show that the MLMs can account for the effect of latent variable.

In the case where there are no latent variables, we can identify the direct dependencies via conditional MI~\citep{AnTa12hig} because any $x_i$ and $x_j$ that are disconnected in the graphical model will have zero conditional MI, \emph{i.e.}, $I(x_i; x_j | \masksequencetwo)=0$.
One valuable aspect of MLM is that we can identify direct dependencies even in the presence of latent variables.

If we naively measure statistical dependency by mutual information, the coordinates of $\rmX$ would appear dependent on each other because they are all connected with $\rmZ$.
However, the MLM objective resolves this issue by conditioning on $\masksequencetwo$.
We show that latent variables (such as topics) that are easy to predict from $\masksequencetwo$ can be ignored when considering conditional MI.
\begin{restatable}{prop}{entropyprop}
  \label{prop:mi_entropy}
  The gap between conditional MI with and without latent variables is bounded by the conditional entropy $H(\rmZ|\masksequencetwo)$,
    \begin{multline*}
        I(x_i; x_j | \masksequencetwo) - I(x_i; x_j | \rmZ, \masksequencetwo) \\
        \leq 2H(\rmZ | \masksequencetwo).
    \end{multline*}
\end{restatable}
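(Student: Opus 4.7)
The plan is to reduce the statement to a comparison of conditional mutual informations that both involve $\rmZ$, and then bound those by the conditional entropy of $\rmZ$ given $\masksequencetwo$. The key tool is the chain rule for mutual information, applied twice to the joint information $I(x_i; x_j, \rmZ \mid \masksequencetwo)$.

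First, I would write out the two decompositions
\[
I(x_i; x_j, \rmZ \mid \masksequencetwo) = I(x_i; \rmZ \mid \masksequencetwo) + I(x_i; x_j \mid \rmZ, \masksequencetwo),
\]
\[
I(x_i; x_j, \rmZ \mid \masksequencetwo) = I(x_i; x_j \mid \masksequencetwo) + I(x_i; \rmZ \mid x_j, \masksequencetwo).
\]
Equating the right-hand sides and rearranging gives the exact identity
\[
I(x_i; x_j \mid \masksequencetwo) - I(x_i; x_j \mid \rmZ, \masksequencetwo) = I(x_i; \rmZ \mid \masksequencetwo) - I(x_i; \rmZ \mid x_j, \masksequencetwo).
\]
This is the key algebraic step: the gap introduced by ignoring $\rmZ$ when measuring the dependence between $x_i$ and $x_j$ is exactly the amount of information $x_i$ carries about $\rmZ$, minus the residual information $x_i$ still carries about $\rmZ$ once $x_j$ is also conditioned on.

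Next I would bound the right-hand side. Since mutual information is non-negative and $I(A;B\mid C) \le H(B\mid C)$, both terms on the right are in $[0, H(\rmZ \mid \masksequencetwo)]$; here I use that conditioning reduces entropy, so $H(\rmZ \mid x_j, \masksequencetwo) \le H(\rmZ \mid \masksequencetwo)$. Applying the triangle inequality to the difference yields
\[
I(x_i; \rmZ \mid \masksequencetwo) - I(x_i; \rmZ \mid x_j, \masksequencetwo) \le 2\, H(\rmZ \mid \masksequencetwo),
\]
which is exactly the claimed bound. A slightly tighter bound of $H(\rmZ \mid \masksequencetwo)$ is in fact available by dropping the non-negative subtracted term, but the factor of two version is cleaner and matches the proposition.

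There is no real obstacle in this argument; it is a routine information-theoretic manipulation. The only care needed is making sure the chain rule is applied with the conditioning variable $\masksequencetwo$ held consistently throughout, so that both expansions of $I(x_i; x_j, \rmZ \mid \masksequencetwo)$ are literally identities before any inequality is invoked. The takeaway the statement is meant to convey is that any latent information that is nearly determined by the observed context ($H(\rmZ \mid \masksequencetwo)$ small) can safely be ignored when using conditional mutual information from the MLM to detect direct token-token dependencies.
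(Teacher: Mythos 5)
Your proof is correct and uses essentially the same argument as the paper: the paper's chain-rule expansion of $I(x_i;x_j\mid\rmZ,\masksequencetwo)$ in terms of entropies is, after cancellation, exactly your identity $I(x_i;x_j\mid\masksequencetwo)-I(x_i;x_j\mid\rmZ,\masksequencetwo)=I(x_i;\rmZ\mid\masksequencetwo)-I(x_i;\rmZ\mid x_j,\masksequencetwo)$, and both then bound each surviving term by $H(\rmZ\mid\masksequencetwo)$. Your side remark that the factor of two can be dropped (since the subtracted mutual information is non-negative) is also valid and indeed gives a strictly tighter bound than the one stated in the proposition.
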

This suggests that when the context $\masksequencetwo$ captures enough of the latent information, conditional MI can remove the confounding effect of the shared topic $\rmZ$ and extract the direct and sparse dependencies within $\rmX$ (see \Cref{app:proof} for the proof).



\textbf{MLM objective encourages capturing conditonal MI.}
We have now shown that conditional MI captures direct dependencies among tokens, even in the presence of latent variables. 
Next, we will show that the MLM objective ensures that a LM with low log-loss accurately captures the conditional MI.
We now show that learning the MLM objective implies high-quality estimation of conditional MI. 
Denote $X(i, v)$ as substituting $x_i$ with a new token $v$,
\begin{multline}
    X(i, v) = \langle x_1, \ldots, x_{i-1}, v, x_{i+1}, 
     \ldots, x_L  \rangle. \nonumber
\end{multline}
Conditional MI is defined as the expected pointwise mutual information (PMI) conditioned on the rest of the tokens,
{
\small
$$
   I_{p}
   =\; \underset{x_i, x_j}{\E} [\; \log p(x_i | \masksequence(j, x_j))
- \log \underset{x_j | x_i}{\E} p(x_i | \masksequence(j, x_j))\;]
$$
}
where $I_{p}$ is the abbreviation of $I_{p}(x_i;x_j|\masksequencetwo)$.
Our main result is that the log-loss MLM objective directly bounds the gap between the true conditional mutual information from the data distribution and an estimator that uses the log-probabilities from the model. More formally,
\begin{restatable}{prop}{estimatorprop}
    \label{prop:estimator}
    Let 
    {
        \small
        $$
        \hat{I}_{p_{\theta}} =\underset{x_i, x_j}{\E} [\log p_{\theta}(x_i | \masksequence(j, x_j))
        - \log \underset{x_j | x_i}{\E} p_{\theta}(x_i | \masksequence(j, x_j))]
        $$
    }
    be an estimator constructed by the model distribution $p_{\theta}$.
    Then we can show,
    {
        \small
        $$
        \lvert \hat{I}_{p_{\theta}}-I_{p} \rvert
        \leq \underset{x_j}{\E}\; \dkl{p(x_i | \masksequence(j, x_j))}{p_{\theta}(x_i | \masksequence(j, x_j))},
        $$
    }
    where $D_{\rm kl}$ represents the KL-divergence.
\end{restatable}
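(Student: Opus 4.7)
The plan is to express the gap $\hat{I}_{p_\theta} - I_p$ as a difference of two KL divergences and then bound it using the joint convexity of KL divergence (equivalently, the log-sum inequality).

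First, I would split $\hat{I}_{p_\theta} - I_p$ by linearity of expectation into $(A_\theta - A) + (B - B_\theta)$, where $A_\star := \E_{x_i, x_j}[\log p_\star(x_i \mid \masksequence(j, x_j))]$ and $B_\star := \E_{x_i, x_j}[\log \E_{x_j \mid x_i} p_\star(x_i \mid \masksequence(j, x_j))]$ for $\star \in \{p, p_\theta\}$. Integrating out $x_i$ under the true conditional $p(x_i \mid \masksequence(j, x_j))$ turns $A_\theta - A$ into $-\overline{D}_1$, where
\[\overline{D}_1 := \E_{x_j} \dkl{p(x_i \mid \masksequence(j, x_j))}{p_\theta(x_i \mid \masksequence(j, x_j))}\]
is exactly the right-hand side of the claimed bound. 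Because the inner marginalization in $B_\star$ does not depend on the outer $x_j$, the term $B - B_\theta$ collapses to a single KL divergence $\overline{D}_2 := \E \dkl{p(x_i \mid \masksequencetwo)}{p_\theta^{\mathrm{marg}}(x_i \mid \masksequencetwo)}$, where $p_\theta^{\mathrm{marg}}(x_i \mid \masksequencetwo)$ is the MLM-implied marginal obtained by averaging $p_\theta(x_i \mid \masksequence(j, x_j))$ against the true conditional distribution of $x_j$.

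The key step is then to show $\overline{D}_2 \leq \overline{D}_1$. Both $p(x_i \mid \masksequencetwo)$ and $p_\theta^{\mathrm{marg}}(x_i \mid \masksequencetwo)$ are the \emph{same} convex combination, with mixture weights $p(x_j \mid \masksequencetwo)$, of the conditionals $p(x_i \mid \masksequence(j, x_j))$ and $p_\theta(x_i \mid \masksequence(j, x_j))$, respectively. Applying joint convexity of KL divergence (or directly the log-sum inequality) conditional on $\masksequencetwo$ and then averaging over the context yields $\overline{D}_2 \leq \overline{D}_1$. Combining this with $\overline{D}_1, \overline{D}_2 \geq 0$ gives $\lvert \hat{I}_{p_\theta} - I_p \rvert = \overline{D}_1 - \overline{D}_2 \leq \overline{D}_1$, which is the desired inequality.

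The main obstacle I anticipate is notational bookkeeping rather than any deep analytic step. The estimator $\hat{I}_{p_\theta}$ interleaves samples drawn from the true data distribution with evaluations under $p_\theta$, and the argument hinges on interpreting the averaging over $x_j$ as being against the true conditional $p(x_j \mid \masksequencetwo)$, so that the two marginals inside $\overline{D}_2$ are mixtures with identical mixing weights and joint convexity applies directly. Once that is settled, the remainder of the proof is a short application of the log-sum inequality together with two standard KL-divergence identities.
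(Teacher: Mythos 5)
Your proposal is correct and follows essentially the same route as the paper's proof: decompose $I_p - \hat{I}_{p_\theta}$ into $\E_{x_j}\dkl{p(x_i\mid x_j,\masksequencetwo)}{p_\theta(x_i\mid x_j,\masksequencetwo)}$ minus $\dkl{\E_{x_j}p(x_i\mid x_j,\masksequencetwo)}{\E_{x_j}p_\theta(x_i\mid x_j,\masksequencetwo)}$ (your $\overline{D}_1 - \overline{D}_2$, and note your $\overline{D}_2$ and the paper's second term are the same quantity), bound one direction by dropping the nonnegative $\overline{D}_2$, and bound the other by showing $\overline{D}_2\le\overline{D}_1$ via joint convexity of KL / Jensen. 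Your closing remark about the inner averaging over $x_j$ needing to be against $p(x_j\mid\masksequencetwo)$ rather than $p(x_j\mid x_i,\masksequencetwo)$ is a useful clarification: the paper's definition literally writes $\E_{x_j\mid x_i}$, but its own proof (and the identification of the second term as a KL between mixtures with common weights) only goes through under the reading you adopt, so you have caught a small notational slip in the source.
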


Here, the KL-divergence corresponds to the $\lmlm$ objective, up to a constant entropy term that depends on $p$. We present the proof in \Cref{app:proof}. In other words, the MLM objective is implicitly encouraging the model to match its implied conditional MI to that of the data. We now use this result to create an estimator that extracts the conditional independence structures implied by MLM.

\subsection{Extracting statistical dependencies implied by MLMs}
Our earlier analysis in~\Cref{prop:estimator} suggests that an MLM with low loss has an accurate approximation of conditional mutual information. Using this result, we will now propose a procedure which estimates $\hat{I}_{p_{\theta}}$.
The definition of $\hat{I}_{p_{\theta}}$ shows that if we can access samples of $x_i$ and $x_j$ from the true distribution $p$, then we can directly estimate the conditional mutual information by using the log probabilities from the MLM. Unfortunately, we cannot draw new samples of $x_j \mid \masksequencetwo$, leading us to approximate this distribution using Gibbs sampling on the MLM distribution.

Our Gibbs sampling procedure is similar to the one proposed in \citet{WaCh19ber}.
We start with $X^{0} =\; \masksequencetwo$.
For the $t$th iteration, we draw a sample $x^{t}_{i}$ from $p_{\theta}(x_i | \masksequence^{t-1})$, and update by $X^{t} = X^{t-1}(i, x^{t}_{i})$.
Then, we draw a sample $x^{t}_{j}$ from $p_{\theta}(x_j | X^{t}_{\setminus j})$ and set $X^{t} = X^{t}(j, x^{t}_{j})$.
We repeat and use the samples $(x^{1}_{i}, x^{1}_{j}), \ldots, (x^{t}_{i}, x^{t}_{j})$ to compute the expectations for conditional MI.

This procedure relies upon an additional assumption that samples drawn from the MLM are faithful approximations of the data generating distribution. 
However, we show empirically that even this approximation is sufficient to test the hypothesis that the conditional independences learned by an MLM capture syntactic dependencies (\Cref{sec:parsing}).
\section{Experiment}
We now test two predictions from our analyses.
First, similar to our observation in the case study, we show that \specific masks do not explain the success of uniform masks on three real-world datasets.
Second, our alternative view of relating MLM to graphical models suggests that statistical dependencies learned by MLMs may capture linguistic structures useful for downstream tasks. We demonstrate this by showing that MLMs' statistical dependencies reflect syntactic dependencies.

\subsection{Uniform vs Cloze-like Masking}
\label{sec:random_mask}

\begin{figure*}[ht!]
    \centering
    \includegraphics[width=\textwidth]{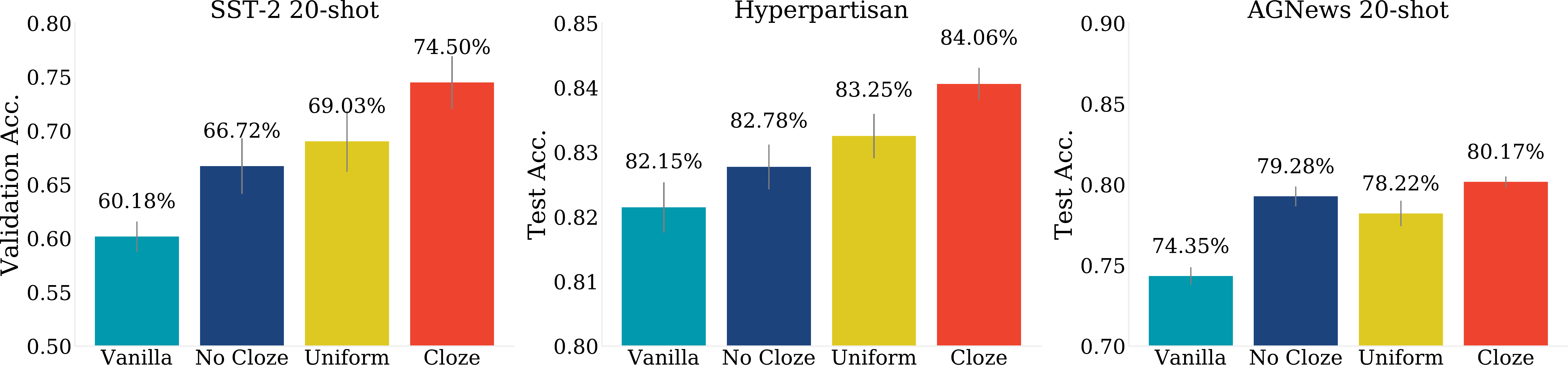}
    \caption{Finetuning performance with different masking strategies averaged across twenty random trials and error bars showing $95\%$ confidence intervals.
    \textsc{Vanilla} represents a BERT model without any second-stage pretraining.
    \textsc{Cloze} and \textsc{NoCloze} represent models train with or without \specific masks, respectively.
    \textsc{Uniform} uses the uniform random masking strategy proposed in \citet{DeCh19ber} for second-stage pretraining.
    }
    \label{fig:finetune_results}
    \vspace{-10pt}
\end{figure*}

\textbf{Setup.} 
We now demonstrate that real-world tasks and MLMs show a gap between task-specific cloze masks and random masks.
We compare the MLM with random masking to two different control groups. 
In the positive control (\textsc{Cloze}), we pretrain with only \specific masks and in the negative control (\textsc{NoCloze}), we pretrain by explicitly excluding \specific masks.
If the success of MLM can be mostly explained by implicit cloze reductions, then we should expect \textsc{Cloze} to have strong downstream performance while \textsc{NoCloze} leads to a minimal performance gain. 
We compare pretraining with the uniform masking strategy used in BERT (\textsc{Uniform}) to these two control groups.
If \textsc{Uniform} performs worse than the positive control and more similar to the negative control, then we know that uniform masking does not leverage \specific masks effectively.

\textbf{Simulating Pretraining.} 
Given computational constraints, we cannot retrain BERT from scratch.
Instead, we approximate the pretraining process by continuing to update BERT with MLM~\citep{GuMa20don}, which we refer to as second-stage pretraining.
Although this is an approximation to the actual pretraining process, the second-stage pretraining shares the same fundamental problem for pretraining: how can unsupervised training lead to downstream performance gains?

We study the effectiveness of different masking strategies by comparing to a BERT model without second-stage pretraining (\textsc{Vanilla}).
We experiment with three text classification datasets: SST-2~\citep{SoPe13rec}, Hyperpartisan~\citep{KiMe19sem}, and AGNews~\citep{ZhZh15cha}. 
SST-2 classifies movie reviews by binary sentiment;
Hyperpartisan is a binary classification task on whether a news article takes an extreme partisan standpoint; and
AGNews classifies news articles into four different topics.
On SST-2 and AGNews, we perform the second-stage pretraining on the training inputs (not using the labels).
On Hyperpartisan, we use 100k unlabeled news articles that are released with the dataset.
For SST-2 and AGNews, we study a low-resource setting and set the number of finetuning examples to be $20$.
For Hyperpartisan, we use the training set, which has $515$ labeled examples.
All evaluations are performed by fine-tuning a \texttt{bert-base-uncased} model (See \Cref{app:details} for full details).

\textbf{Approximating \Specific Masking.}
We cannot identify the optimal set of \specific masks for an arbitrary downstream task, but these three tasks have associated lexicons which we can use to approximate the \specific masks.
For SST-2, we take the sentiment lexicon selected by \citet{HuLi04min}; for Hyperpartisan, we take the NRC word-emotion association lexicon~\citep{MoTu13cro}; and for AGNews, we extract topic words by training a logistic regression classifier and taking the top 1k features to be \specific masks. 

\textbf{Results.}
\Cref{fig:finetune_results} plots the finetuning performance of different masking strategies.
We observe that \textsc{Uniform} outperforms \textsc{Vanilla}, which indicates that second-stage pretraining is extracting useful information and our experiment setup is useful for studying how MLM leads to performance gains.
As expected, \textsc{Cloze} achieves the best accuracy, which confirms that \specific masks can be helpful and validates our cloze approximations.

The \textsc{Uniform} mask is much closer to \textsc{NoCloze} than \textsc{Cloze}. This suggests that uniform masking does not leverage \specific masks well and cloze reductions alone cannot account for the success of MLM.
This view is further supported by the observation that \textsc{NoCloze} outperforms \textsc{Vanilla} suggesting that \generic masks that are not \specific still contain useful inductive biases. 

Our results support our earlier view that there may be an alternative mechanism that allows \generic masks that are not \specific to benefit downstream learning.
Next, we will empirically examine BERT's learned conditional independence structure among tokens and show that the statistical dependencies relate to syntactic dependencies.

\subsection{Analysis: Unsupervised Parsing}
\label{sec:parsing}

\begin{figure*}[ht!]
    \centering
    \includegraphics[width=0.7\textwidth]{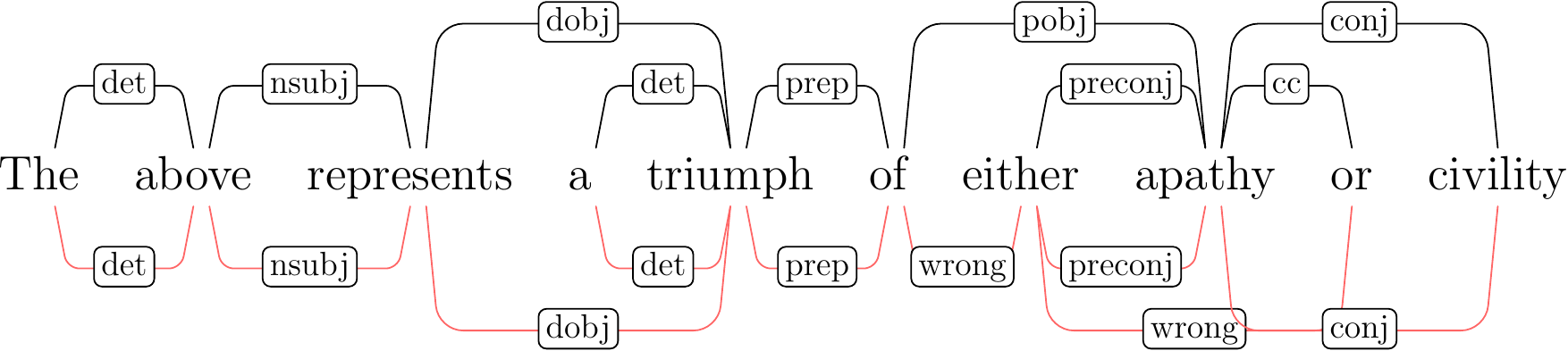}
    \caption{An example parse extracted from conditional MI. The black parse tree above the sentence represents the ground-truth parse and the red parse below is extracted from conditional MI. The correctly predicted edges are labeled with the annotated relations, and the incorrect ones are labeled as wrong.}
    \label{fig:parsing_example}
    \vspace{-15pt}
\end{figure*}

Our analysis in \cref{sec:gaussian_theory} shows that conditional MI (which is optimized by the MLM objective) can extract conditional independences.
We will show that statistical dependencies estimated by conditional MI are related to syntactic dependencies by using conditional MI for unsupervised parsing.

\textbf{Background.}
One might expect that the statistical dependencies among words are correlated with syntactic dependencies.
Indeed, \citet{FuQi19syn} show that heads and dependents in dependency parse trees have high pointwise mutual information (PMI) on average.
However, previous attempts~\citep{CaCh92two, Pa02gra} 
show that unsupervised parsing approaches based on PMI achieve close to random accuracy.
Our analysis suggests that MLMs extract a more fine-grained notion of statistical dependence (conditional MI) which does not suffer from the existence of latent variables (\Cref{prop:mi_entropy}). We now show that the conditional MI captured by MLMs achieves far better performance, on par with classic unsupervised parsing baselines.

\textbf{Baselines.}
We compare conditional MI to PMI as well as conditional PMI, an ablation in which we do not take expectation over possible words.
For all statistical dependency based methods (cond. MI, PMI, and cond. PMI), we compute pairwise dependence for each word pair in a sentence and construct a minimum spanning tree on the negative values to generate parse trees.
To contextualize our results, we compare against three simple baselines: \textsc{Random} which draws a random tree on the input sentence, \textsc{LinearChain} which links adjacent words in a sentence, and a classic unsupervised parsing method~\citep{KlMa04cor}.

\textbf{Experimental Setup.}
We conduct experiments on the English Penn Treebank using the WSJ corpus
and convert the annotated constituency parses to Stanford Dependency Formalism~\citep{MaMa06gen}.
Following \citet{YaJi20sec}, we evaluate on sentences of length $\leq 10$ in the test split, which contains 389 sentences (\Cref{app:more_parsing} describes the same experiment on longer sentences, which have similar results).
We experiment with the \texttt{bert-base-cased} model (more details in \Cref{app:details}) and evaluate by the undirected unlabeled attachment score (UUAS).

\begin{table}[t]
\centering
\begin{tabular}{cc}
    \toprule
    Method & UUAS \\
    \midrule
    \parsingmethod{Random} & $28.50 \pm 0.73$ \\
    \parsingmethod{LinearChain} & $54.13$ \\
    \makecell{\citet{KlMa04cor}}  & $55.91 \pm 0.68$ \\
    \midrule
    \parsingmethod{PMI} & $33.94$ \\
    \parsingmethod{Conditional PMI} & $52.44 \pm 0.19$ \\
    \parsingmethod{Conditional MI} & $\mathbf{58.74} \pm 0.22$ \\
    \bottomrule
\end{tabular}
\caption{Unlabeled Undirected Attachment Score on WSJ10 test split (section 23). Error bars show standard deviation across three random seeds.
}
\label{tab:parsing}
\vspace{-15pt}
\end{table}

\textbf{Results.} 
\Cref{tab:parsing} shows a much stronger-than-random association between conditional MI and dependency grammar.
In fact, the parses extracted from conditional MI has better quality than \textsc{LinearChain} and the classic method~\citep{KlMa04cor}.
Unlike conditional MI, PMI only has a close-to-random performance, which is consistent with prior work.
We also see that conditional MI outperforms conditional PMI, which is consistent with our theoretical framework that suggests that conditional MI (and not PMI) recovers the graphical model structure.

We also perform a fine-grained analysis by investigating relations where conditional MI differs from \textsc{LinearChain}.
Because the test split is small and conditional MI does not involve any training, we perform this analysis on $5{,}000$ sentences from the training split.
\Cref{tab:relation_analysis} presents the results and shows that conditional MI does not simply recover the linear chain bias.
Meanwhile, we also observe a deviation between conditional MI and dependency grammar on relations like \texttt{number} and \texttt{cc}.
This is reasonable because certain aspects of dependency grammar depend on human conventions that do not necessarily have a consensus~\citep{PoMa13coo}.

\begin{table}[t]
    \centering
    \begin{tabular}{ccc}
    \toprule
    Relation & Conditional MI & Linear Chain \\
    \midrule
    \texttt{xcomp} & \textbf{48.18}  & 9.93  \\
    \texttt{conj} & \textbf{43.36}  & 7.58  \\
    \texttt{dobj} & \textbf{58.96}  & 30.33  \\
    \midrule
    \texttt{number} & 50.55  & \textbf{92.62}  \\
    \texttt{quantmod} & 56.82  & \textbf{72.73}  \\
    \texttt{cc} & 31.39  & \textbf{41.10}  \\
    \bottomrule
\end{tabular}
    
    \caption{Six relations on which conditional MI disagrees with \textsc{LinearChain} under log odds ratio test with $p = 0{.}05$. A comprehensive list is in \Cref{app:details}.}
    \label{tab:relation_analysis}
    \vspace{-15pt}
\end{table}

\Cref{fig:parsing_example} illustrates with an example parse extracted from conditional MI.
We observe that conditional MI correctly captures \texttt{dobj} and \texttt{conj}.
Knowing the verb, \emph{e.g.}\ \textit{represents}, limits the range of objects that can appear in a sentence so intuitively we expect a high conditional MI between the direct object and the verb.
Similarly for phrases like \textit{``A and B''}, we would expect \textit{A} and \textit{B} to be statistically dependent.
However, conditional MI fails to capture \texttt{cc} (between \textit{apathy} and \textit{or}).
Instead, it links \textit{or} with \textit{either} which certainly has statistical dependence.
This once again suggests that the `errors' incurred by the conditional PMI method are not simply failures to estimate dependence but natural differences in the definition of dependence.

\section{Discussion and Conclusion}
We study how MLM with uniform masking can learn useful linguistic structures and inductive biases for downstream tasks.
Our work demonstrates that a substantial part of the performance gains of MLM pretraining cannot be attributed to task-specific, \specific masks.
Instead, learning with task-agnostic, \generic masks encourages the model to capture direct statistical dependencies among tokens, and we show through unsupervised parsing evaluations that this has a close correspondence to syntactic structures.
Existing work has suggested that statistical and syntactic dependencies are fundamentally different, with unsupervised parsing based on PMI achieving close-to-random performance.
Our work demonstrates that this is not necessarily the case, and better measures of statistical dependence (such as those learned by MLMs) can serve as implicit supervision for learning syntactic structures.
Our findings open new space for future works on how syntax can be learned in an emergent way and on how to design masking strategies that further improve dependency learning.

\bibliography{main}
\bibliographystyle{acl_natbib}

\clearpage
\onecolumn
\appendix
\section{Experimental Details}
\label{app:details}

\textbf{Experimental details for \Cref{sec:case_study}}
Our transformers have $2$ layers and for each transformer block, the hidden size and the intermediate size are both $64$.
We finetune the models for $10$ epochs and apply early stopping based on validation accuracy.
We use Adam~\citep{KiBa14ada} for optimization, using a learning rate of $1e^{-3}$ for pretraining and $1e^{-4}$ for finetuning.

\textbf{Experimental details for \Cref{sec:random_mask}}
\Cref{tab:data_stats} summarizes the dataset statistics of three real-world datasets we studied.
For second stage pretraining, we update the BERT model for $10$ epochs.
Following the suggestion in \citet{ZhWu20rev}, we finetune the pretrained BERT models for $400$ steps, using a batch size of $16$ and a learning rate of $1e^{-5}$.
We apply linear learning rate warmup for the first $10\%$ of finetuning and linear learning rate decay for the rest.
For SST-2 and AGNews, we average the results over $20$ random trials.
For Hyperpartisan, because the test set is small and the variation is larger, we average the results over $50$ random trials and evaluate on the union the development set and the test set for more stable results.

\begin{table}
    \centering
\setlength{\tabcolsep}{2pt}
\begin{tabular}{ccccc}
    \toprule
    Dataset & \# Classes & \# Pretrain & \# Finetune & \# Test \\
    \midrule
    SST-2 & 2 & 67k & 20 & 1.8k \\
    Hyperpartisan & 2 & 100k & 515 & 130 \\
    AGNews & 4 & 113k & 20 & 6.7k \\
    \bottomrule
\end{tabular}
    \caption{Specifications of datasets. For AGNews, we put away 6.7k as a development set.}
    \label{tab:data_stats}
\end{table}

\textbf{Experimental details for \Cref{sec:parsing}}
We convert the annotated constituency parses using the Stanford CoreNLP package~\citep{MaSu14sta}.
We compute conditional MI and conditional PMI using the \texttt{bert-base-cased} model and run Gibbs sampling for $2000$ steps.
BERT's tokenization may split a word into multiple word pieces.
We aggregate the dependencies between a word and multiple word pieces by taking the maximum value.
We compute the PMI statistics and train the K\&M model~\citep{KlMa04cor} on sentences of length $\leq 10$ in the WSJ train split (section 2-21).
For DMV, we train with the annotated POS tags using a public implementation released by \citep{HeNe18uns}.
Results are averaged over three runs when applicable.

\section{Additional Results}
\subsection{Additional Results in \Cref{sec:parsing}}
\label{app:more_parsing}
We conduct an additional experiment on the English Penn Treebank to verify that conditional MI can extract parses for sentences longer than ten words.
To expedite experimentation, we subsample $200$ out of $2416$ sentences from the test split of English Penn Treebank and the average sentence length of our subsampled dataset is $24.1$ words.
When applicable, we average over three random seeds and report standard deviations.
\Cref{tab:appendix_parsing} presents the UUAS of conditional MI and other methods.
We draw similar conclusions as in \Cref{sec:parsing}, observing that the parses drawn by conditional MI have higher quality than those of other baselines.

\Cref{tab:relation_analysis_more} presents a comprehensive list of relations on which Conditional MI disagrees with \textsc{LinearChina} under a log odds ratio test with $p=0.05$.

\begin{table}[t]
\centering
\begin{tabular}{cc}
    \toprule
    Method & UUAS \\
    \midrule
    \parsingmethod{Random} & $9.14 \pm 0.42$ \\
    \parsingmethod{LinearChain} & $47.69$ \\
    \makecell{\citet{KlMa04cor}}  & $48.76 \pm 0.24$ \\
    \midrule
    \parsingmethod{PMI} & $28.05$ \\
    \parsingmethod{Conditional PMI} & $44.75 \pm 0.09$ \\
    \parsingmethod{Conditional MI} & $\mathbf{50.62} \pm 0.38$ \\
    \bottomrule
\end{tabular}
\caption{Unlabeled Undirected Attachment Score on subsampled WSJ test split (section 23). Error bars show standard deviation across three random seeds.
}
\label{tab:appendix_parsing}
\end{table}

\begin{table}[t!]
    \centering
    \begin{tabular}{ccc}
    \toprule
    Relation & Conditional MI & Linear Chain \\
    \midrule
    \texttt{xcomp} & \textbf{48.18}  & 9.93  \\
    \texttt{conj} & \textbf{43.36}  & 7.58  \\
    \texttt{nsubjpass} & \textbf{33.81}  & 0.47  \\
    \texttt{dobj} & \textbf{58.96}  & 30.33  \\
    \texttt{mark} & \textbf{30.71}  & 9.45  \\
    \texttt{poss} & \textbf{58.63}  & 40.96  \\
    \texttt{ccomp} & \textbf{20.92}  & 4.18  \\
    \texttt{vmod} & \textbf{55.32}  & 41.84  \\
    \texttt{tmod} & \textbf{39.25}  & 27.68  \\
    \texttt{dep} & \textbf{50.15}  & 40.03  \\
    \texttt{pobj} & \textbf{48.68}  & 40.79  \\
    \texttt{nsub} & \textbf{55.87}  & 48.69  \\
    \midrule
    \texttt{number} & 50.55  & \textbf{92.62}  \\
    \texttt{possessive} & 72.00  & \textbf{97.78}  \\
    \texttt{pcomp} & 60.00  & \textbf{77.00}  \\
    \texttt{quantmod} & 56.82  & \textbf{72.73}  \\
    \texttt{appos} & 55.56  & \textbf{70.59}  \\
    \texttt{num} & 65.11  & \textbf{76.49}  \\
    \texttt{cc} & 31.39  & \textbf{41.10}  \\
    \texttt{prep} & 56.41  & \textbf{66.12}  \\
    \texttt{auxpass} & 75.00  & \textbf{83.26}  \\
    \texttt{nn} & 72.97  & \textbf{77.88}  \\
    \texttt{aux} & 55.49  & \textbf{59.66}  \\
    \bottomrule
\end{tabular}
    
    \caption{All relations on which Conditional MI disagree with \textsc{LinearChina} under a log odds ratio test with $p=0.05$.}
    \label{tab:relation_analysis_more}
\end{table}
\clearpage
\section{Proofs}
\label{app:proof}

\paragraph{Proof of Proposition \ref{prop:pca}}
We first recall our statement.

\pcaprop*

\begin{proof}
  
We will use the Davis-Kahan Theorem for our proof.

\newtheorem*{theorem*}{Theorem}
\begin{theorem*}[Davis-Kahan~\citep{StSu90mat}]
  Let $\sigma$ be the eigengap between the $k$th and the $k{+}1$th eigenvalue of two positive semidefinite symmetric matrices $\Sigma$ and $\Sigma'$. Also, let $V$ and $V'$ be the first $k$ eigenvectors of $\Sigma$ and $\Sigma'$ respectively.
  Then we have,
  $$
  \frac{1}{\sqrt{2}} \norm{VV^{\top} - V'{V'}^{\top}}_{op} \leq \frac{\norm{\Sigma - \Sigma}_{op}}{\sigma}.
  $$
  That is, we can bound the error in the subspace projection in terms of the matrix perturbation.
\end{theorem*}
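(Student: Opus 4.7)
The plan is to follow the classical Davis--Kahan strategy: convert the difference of spectral projectors $VV^\top - V'{V'}^\top$ into a Sylvester equation whose right-hand side is controlled by $\norm{\Sigma - \Sigma'}_{op}$ and whose inverse is controlled by the spectral gap $\sigma$. The only property of $V, V'$ that enters the argument is that they span invariant subspaces, so $P\Sigma = \Sigma P$ and $P'\Sigma' = \Sigma' P'$, where $P = VV^\top$ and $P' = V'{V'}^\top$ are the associated orthogonal projectors.

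Setting $Q = I - P$ and $Q' = I - P'$, the first step is the algebraic identity
\[
P - P' \;=\; P(I - P') - (I - P)P' \;=\; PQ' - QP',
\]
which reduces the task to bounding the two cross blocks $PQ'$ and $QP'$. For $X = PQ'$, combining the commutation identities yields the Sylvester equation
\[
\Sigma(PQ') - (PQ')\Sigma' \;=\; P(\Sigma - \Sigma')Q',
\]
and an analogous equation holds for $QP'$. Restricted to matrices whose range lies in $\mathrm{range}(P)$ and whose co-range lies in $\mathrm{range}(Q')$, the map $X \mapsto \Sigma X - X\Sigma'$ becomes, in the basis of tensor products of eigenvectors of $\Sigma$ and $\Sigma'$, multiplication by the differences $\lambda_i(\Sigma) - \lambda_j(\Sigma')$ for $i \le k < j$, each of which is at least $\sigma$ by assumption. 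Inverting the Sylvester operator therefore gives $\norm{PQ'}_{op} \le \norm{\Sigma - \Sigma'}_{op}/\sigma$, and symmetrically for $\norm{QP'}_{op}$.

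To assemble the final bound, note that $PQ'$ and $QP'$ send vectors into the mutually orthogonal subspaces $\mathrm{range}(P)$ and $\mathrm{range}(Q)$, respectively. Exploiting this orthogonality on both the input and output sides of $P - P' = PQ' - QP'$ yields $\norm{P-P'}_{op}^{2} \le 2\max(\norm{PQ'}_{op}^{2},\, \norm{QP'}_{op}^{2})$, producing the $\sqrt{2}$ factor in the statement. The main obstacle is the Sylvester-inversion step: in Frobenius norm it follows immediately from the tensor-product diagonalization, but making the operator-norm bound sharp typically requires an additional ingredient such as a contour-integral resolvent representation that isolates the two spectral blocks, or a careful simultaneous rotation that block-diagonalizes $\Sigma$ and $\Sigma'$ on the invariant subspaces.
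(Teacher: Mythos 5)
You should first note that the paper does not prove this statement at all: Davis--Kahan is imported as a black box with a citation to Stewart and Sun and then applied in the proof of Proposition~2, so there is no ``paper proof'' to compare against. Your reconstruction is the standard proof of the $\sin\Theta$ theorem and its skeleton is sound: the identity $P-P'=PQ'-QP'$ checks out, the Sylvester equations $\Sigma(PQ')-(PQ')\Sigma'=P(\Sigma-\Sigma')Q'$ and its mirror image follow from the commutation of each projector with its own matrix, and the orthogonality of $\mathrm{range}(P)$ and $\mathrm{range}(Q)$ gives $\norm{(P-P')v}_2^2=\norm{PQ'v}_2^2+\norm{QP'v}_2^2$, which yields the $\sqrt{2}$. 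The step you flag as the obstacle---inverting the Sylvester operator in operator norm rather than Frobenius norm---is indeed the technical heart, but in this setting it closes with constant $1$: since you compare the top-$k$ block of one matrix against the tail block of the other, the two relevant spectra lie in disjoint intervals separated by $\sigma$, so after shifting by $c=\lambda_{k+1}(\Sigma')$ the solution admits the integral representation $X=\int_0^\infty e^{-t(\Sigma-cI)}\,R\,e^{t(\Sigma'-cI)}\,dt$ on the appropriate subspaces, whence $\norm{X}_{op}\le\int_0^\infty e^{-t\sigma}\norm{R}_{op}\,dt=\norm{R}_{op}/\sigma$; the contour-integral or $\pi/2$-constant machinery is only needed when the spectra interleave. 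Two small caveats worth making explicit: the symmetric bound on $\norm{P-P'}_{op}$ requires the separation condition in both directions (i.e.\ $\lambda_k(\Sigma)-\lambda_{k+1}(\Sigma')\ge\sigma$ \emph{and} $\lambda_k(\Sigma')-\lambda_{k+1}(\Sigma)\ge\sigma$), which is presumably what the loosely worded ``eigengap between the $k$th and $(k{+}1)$th eigenvalue of two matrices'' intends; and the displayed inequality in the statement contains a typo ($\norm{\Sigma-\Sigma}_{op}$ should read $\norm{\Sigma-\Sigma'}_{op}$), which your argument correctly supplies.
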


In our setting, we choose $\Sigma = A \sigmazz A^{\top} + \sigmaxx$ and $\Sigma' = A \sigmazz A^{\top}$.
We know the eigengap of $\Sigma'$ is $\lambda_{k}$ because $\Sigma'$ only has $k$ nonzero eigenvalues.
By Weyl's inequality, the $k$th eigenvalue is at most perturbed by $\lambda_{\rmX\rmX, k+1}$, which is the $k{+}1$ eigenvalue of $\sigmaxx$.
Let $V$ be the top $k$ eigenvectors of $\Sigma'$ and assuming $\lambda_k > \lambda_{\rmX\rmX, k{+}1}$, we have,
\begin{align*}
  \frac{1}{\sqrt{2}} \norm{AA^{\top} - V{V}^{\top}}_{op} \leq & \frac{\norm{\Sigma - \Sigma'}_{op}}{\lambda_{k} - \lambda_{\rmX\rmX, k+1}} \\
  = & \frac{\norm{\sigmaxx}_{op}}{\lambda_{k} - \lambda_{\rmX\rmX, k+1}}.
\end{align*}

Turning this operator norm bound into approximation bound, we have
\begin{align*}
  \E_{\rmX} \norm{A\rmZ - \xpca}_{2} = 
  & \E_{\rmX} \norm{AA^{\top}A\rmZ - V{V}^{\top}\rmX}_{2} \\
  = & \E_{\rmX} \norm{AA^{\top}A\rmZ - VV^{\top}A\rmZ + VV^{\top}A\rmZ - V{V}^{\top}\rmX}_{2} \\
  \leq & \E_{\rmX} \norm{AA^{\top}A\rmZ - VV^{\top}A\rmZ}_{2} + \norm{VV^{\top}A\rmZ - V{V}^{\top}\rmX}_{2} \\
  \leq & \E_{\rmX} \norm{AA^{\top}A\rmZ - VV^{\top}A\rmZ}_{2} + \norm{VV^{\top} (A\rmZ - \rmX)}_{2} \\
  \leq & \E_{\rmX} \norm{AA^{\top} - VV^{\top}}_{op} \cdot \norm{A\rmZ}_{2} + \norm{VV^{\top}}_{op} \norm{A\rmZ - \rmX}_{2}. \\
  = & \E_{\rmX} \norm{AA^{\top} - VV^{\top}}_{op} \cdot \norm{A\rmZ}_{2} + \norm{AA^{\top} + VV^{\top} - AA}_{op} \norm{A\rmZ - \rmX}_{2}\\
  \leq & \E_{\rmX} \norm{AA^{\top} - VV^{\top}}_{op} \cdot \norm{A\rmZ}_{2} + (\norm{AA^{\top}}_{op}+\norm{VV^{\top} - AA}_{op}) \norm{A\rmZ - \rmX}_{2}\\
  = & \E_{\rmX} \norm{AA^{\top} - VV^{\top}}_{op} \cdot (\norm{A\rmZ}_{2} + \norm{A\rmZ - \rmX}_{2}) + \norm{AA^{\top}}_{op} \norm{A\rmZ - \rmX}_{2}.\\
\end{align*}

We use the fact that $\E_{\rmX, \rmZ} \norm{A\rmZ - \rmX}^2_{2} = \mathrm{tr}(\sigmaxx)$ and Jensen's inequality to bound,
$$
\E_{\rmX} \norm{A\rmZ - \rmX}_{2} \leq \sqrt{\mathrm{tr}(\sigmaxx)}.
$$
Combining these inequalities, we have
\begin{align*}
& \E_{\rmX} \norm{A\rmZ - \xpca}_{2} \\
\leq\; & \frac{\sqrt{2}\norm{\sigmaxx}_{op}}{\lambda_{k} - \lambda_{\rmX\rmX, k+1}} \cdot (\norm{A\rmZ}_{2} + 
\sqrt{\mathrm{tr}(\sigmaxx)}) + \norm{AA^{\top}}_{op}\sqrt{\mathrm{tr}(\sigmaxx)}
\end{align*}

\end{proof}

\paragraph{Proof of Proposition \ref{prop:regression}}
We first recall our statement.

\regressionprop*

\begin{proof}
Let $\anoi \in \mathbb{R}^{d{-}1 \times k}$ be the matrix where we omit the $i$th row of $A$ and $\atakei \in \mathbb{R}^{k}$ be the $i$th row of $A$.
Let $\sigmaxxnoi \in \mathbb{R}^{d{-}1 \times d{-}1}$ be the matrix where we omit the $i$th row and $i$th column of $\sigmaxx$, and $\sigmaxxtakei \in \mathbb{R}^{d{-}1}$ be the vector formed by dropping the $i$th row and taking the $i$th column of $\sigmaxx$.
Similarly, denote $\rmX_{\setminus i} \in \mathbb{R}^{d{-}1}$ be the vector where we omit the $i$ coordinate of $X$.

We start by writing down the expression of $\betatwoslsi$.
Recall that the Least Squares regression between two zero-mean Gaussian variables $\rmX$ and $\rmY$ can be written as 
$$\beta = Cov(\rmX, \rmY)Cov(\rmX, \rmX)^{-1},$$
where $Cov(\rmX, \rmX)$ is the covariance matrix of $X$ and we assume it is full rank.
Since $Cov(\xnoi, \rmZ)$ is $\anoi\sigmazz$, we can write the coefficient of regression from $\xnoi$ to $\rmZ$ as
$$\beta_{\xnoi \rightarrow \rmZ} = \sigmazz \anoi^{\top}(\anoi \sigmazz \anoi^{\top} + \sigmaxxnoi)^{-1}$$
and by assumption we have $\beta_{\rmZ \rightarrow \rvx_i} = \atakei$.
So we can write down
$$\betatwoslsi = \atakei \sigmazz \anoi^{\top}(\anoi \sigmazz \anoi^{\top} + \sigmaxxnoi)^{-1}.$$
Now we consider masked regression for the $i$th coordinate, $\rvx_i$,
$$
\betamaskone = (\atakei \sigmazz \anoi^{\top} + \sigmaxxtakei)(\anoi \sigmazz \anoi^{\top} + \sigmaxxnoi)^{-1}.
$$


Comparing $\betatwosls$ and $\betamaskone$, we observe that the second term is the same and the key is to bound the first term.
Consider the error term between the coefficients,
\begin{align*}
& \norm{\sigmaxxtakei (\anoi \sigmazz \anoi^{\top} + \sigmaxxnoi)^{-1}}_{2} \\
\leq\;& \norm{\sigmaxxtakei}_{2} \norm{(\anoi \sigmazz \anoi^{\top} + \sigmaxxnoi)^{-1}}_{op} \\
\leq\;& \norm{\sigmaxxtakei}_{2} \norm{(A \sigmazz A^{\top} + \sigmaxx)^{-1}}_{op}.
\end{align*}
That is, the error term scales with the off-diagonal terms $\norm{\sigmaxxtakei}_{2}$.

Converting our bound on the error term into an approximation bound, we have
$$
\xmaskone = \betatwoslsi \rmX + O(\norm{\sigmaxxtakei}_{2}).
$$

\end{proof}

\paragraph{Proof for Proposition \ref{prop:mi_entropy}. }
\entropyprop*

\begin{proof}
  The proof follows from the definition of conditional mutual information.
  Denote $H(\cdot)$ as the entropy function.

  We start by observing that
  \begin{align*}
  I(x_i; x_j | \rmZ, \masksequencetwo) =\; & I(x_i;x_j|\masksequencetwo) - I(x_i;\rmZ | \masksequencetwo) + I(x_i;\rmZ|x_j, \masksequencetwo) \\
  & \text{(Through chain rule of mutual information.)} \\
  =\;& I(x_i;x_j|\masksequencetwo) + H(\rmZ|x_i, \masksequencetwo) - H(\rmZ | \masksequencetwo) \\
  &+ H(\rmZ|x_j, \masksequencetwo) - H(\rmZ|x_i,x_j, \masksequencetwo). 
  \end{align*}
  Then we have,
  \begin{align*}
    & I(x_i; x_j | \masksequencetwo) - I(x_i; x_j | \rmZ, \masksequencetwo)\\
    =\; &  - H(\rmZ|x_i, \masksequencetwo) + H(\rmZ | \masksequencetwo) -  H(\rmZ|x_j, \masksequencetwo) + H(\rmZ|x_i,x_j, \masksequencetwo) \\
    \leq \; & H(\rmZ | \masksequencetwo) + H(\rmZ|x_i,x_j, \masksequencetwo) \\
    \leq \; & 2 \cdot H(\rmZ | \masksequencetwo).
  \end{align*}
\end{proof}

\estimatorprop*

\begin{proof}
  Expanding the definition of mutual information, we write
  \begin{align*}
  I(x_i; x_j | \masksequencetwo)-\hat{I}_{\theta}(x_i; x_j | \masksequencetwo) 
  =\;& \E_{x_j} [\dkl{p(x_i | x_j, \masksequencetwo)}{p_{\theta}(x_i | x_j, \masksequencetwo)}] -\\
  & \dkl{\E_{x_j}p(x_i | x_j, \masksequencetwo)}{\E_{x_j}p_{\theta}(x_i | x_j, \masksequencetwo)}.
  \end{align*}
  Dropping the the second term, we have
  \begin{align*}
  \hat{I}_{\theta}(x_i; x_j | \masksequencetwo)-I(x_i; x_j | \masksequencetwo)
  \geq - \E_{x_j} [\dkl{p(x_i | x_j, \masksequencetwo)}{p_{\theta}(x_i | x_j, \masksequencetwo)}].
  \end{align*}
  Dropping the the first term, we have
  \begin{align*}
  &I(x_i; x_j | \masksequencetwo)-\hat{I}_{\theta}(x_i; x_j | \masksequencetwo) \\
  \leq\;& 
  \dkl{\E_{x_j}p(x_i | x_j, \masksequencetwo)}{\E_{x_j}p_{\theta}(x_i | x_j, \masksequencetwo)} \\
  \leq\;&
  \E_{x_j} \dkl{p(x_i | x_j, \masksequencetwo)}{p_{\theta}(x_i | x_j, \masksequencetwo)},
  \end{align*}
  which uses the convexity of KL-divergence and Jensen's inequality.
\end{proof}

\end{document}